\newtheorem{definition}{Definition}
\newtheorem{lemma}{Lemma}
\newtheorem{remark}{Remark}
\newcommand{\vect}[1]{\begin{bmatrix}#1\end{bmatrix}}
\newcommand{\conftr}{\textsc{ConfTr}}
\newcommand{\tlicp}{\textsc{TLICP}}
\def\G{\square}
\def\F{\lozenge}
\newcommand{\linebreakand}{%
  \end{@IEEEauthorhalign}
  \hfill\mbox{}\par
  \mbox{}\hfill\begin{@IEEEauthorhalign}
}
\theoremstyle{definition}
\def\BibTeX{{\rm B\kern-.05em{\sc i\kern-.025em b}\kern-.08em
    T\kern-.1667em\lower.7ex\hbox{E}\kern-.125emX}}
\begin{document}

\title{Conformal Prediction for Signal Temporal Logic Inference
\thanks{DISTRIBUTION STATEMENT A. Approved for public release. Distribution is unlimited. This material is based upon work supported by the Department of the Army under Air Force Contract No. FA8702-15-D-0001 or FA8702-25-D-B002. Any opinions, findings, conclusions or recommendations expressed in this material are those of the author(s) and do not necessarily reflect the views of the Department of the Army. © 2025 Massachusetts Institute of Technology. Delivered to the U.S. Government with Unlimited Rights, as defined in DFARS Part 252.227-7013 or 7014 (Feb 2014). Notwithstanding any copyright notice, U.S. Government rights in this work are defined by DFARS 252.227-7013 or DFARS 252.227-7014 as detailed above. Use of this work other than as specifically authorized by the U.S. Government may violate any copyrights that exist in this work.}
}

\author{\IEEEauthorblockN{1\textsuperscript{st} Danyang Li}
\IEEEauthorblockA{\textit{Mechanical Engineering Department} \\
\textit{Boston University}\\
Boston, MA, USA\\
danyangl@bu.edu}
\and
\IEEEauthorblockN{2\textsuperscript{nd} Yixuan Wang}
\IEEEauthorblockA{\textit{Department of Mechanical Engineering} \\
\textit{University of California, Riverside}\\
Riverside, CA, USA\\
ywang1457@ucr.edu}
\and
\IEEEauthorblockN{3\textsuperscript{rd} Matthew Cleaveland}
\IEEEauthorblockA{\textit{Lincoln Laboratory} \\
\textit{Massachusetts Institute of Technology}\\
Boston, MA, USA\\
Matthew.Cleaveland@ll.mit.edu}
\and

\linebreakand

\IEEEauthorblockN{4\textsuperscript{th} Mingyu Cai}
\IEEEauthorblockA{\textit{Department of Mechanical Engineering} \\
\textit{University of California, Riverside}\\
Riverside, CA, USA\\
mingyuc@ucr.edu}
\and
\IEEEauthorblockN{5\textsuperscript{th} Roberto Tron}
\IEEEauthorblockA{\textit{Mechanical Engineering Department} \\
\textit{Boston University}\\
Boston, MA, USA\\
tron@bu.edu}
}

\maketitle

\begin{abstract}
Signal Temporal Logic (STL) inference seeks to extract human-interpretable rules from time-series data, but existing methods lack formal confidence guarantees for the inferred rules. Conformal prediction (CP) is a technique that can provide statistical correctness guarantees, but is typically applied as a post-training wrapper without improving model learning.
Instead, we introduce an end-to-end differentiable CP framework for STL inference that enhances both reliability and interpretability of the resulting formulas.
We introduce a robustness-based nonconformity score, embed a smooth CP layer directly into training, and employ a new loss function that simultaneously optimizes inference accuracy and CP prediction sets with a single term. Following training, an exact CP procedure delivers statistical guarantees for the learned STL formulas. Experiments on benchmark time-series tasks show that our approach reduces uncertainty in predictions (i.e., it achieves high coverage while reducing prediction set size), and improves accuracy (i.e., the number of misclassifications when using a fixed threshold) over state-of-the-art baselines.
\end{abstract}

\section{Introduction}
Motivated by the growing demand for transparency, trustworthiness, and human interpretability, inherently interpretable machine learning models have gained increasing attention, particularly in safety-critical domains such as autonomous driving and healthcare~\cite{8718798, bartocci2014data}. Signal Temporal Logic (STL)~\cite{maler2004monitoring, fainekos2009robustness} has emerged as a compelling framework for this purpose, providing a formal language capable of expressing rich behavioral properties of dynamical systems. STL inference~\cite{baharisangari2021uncertainty}, typically framed as a classification task, learns symbolic formulas from time-series data, enabling rigorous reasoning in a form comprehensible to humans. Early inference approaches typically relied on combinatorial optimization or symbolic search to derive specifications from labeled trajectories~\cite{8550271, jha2019telex, bartocci2014data}. Bombara and Belta~\cite{8550271} proposed an online algorithm for learning STL classifiers from signal traces. Jha et al.~\cite{jha2019telex} presented TeLEx, a framework leveraging tightness metrics to synthesize STL formulas from positive examples alone. While effective in restricted settings, these methods struggled with scalability and expressiveness as data and formulas grow more complex. Recently, neural-symbolic frameworks have been proposed to overcome these limitations by embedding STL inference into differentiable neural architectures~\cite{fronda2022differentiable, li2024tlinet}. In particular, our prior work introduced TLINet~\cite{li2024tlinet}, a differentiable STL inference network that jointly learns formula structure and parameters end-to-end, significantly improving scalability and expressiveness. However, no standard criterion exists for assessing the quality of an inferred STL formula. To address this gap, we leverage conformal prediction~\cite{vovk2005algorithmic}.

Conformal prediction (CP) provides statistical guarantees by constructing prediction sets that contain the true label with a user-specified probability, known as \emph{coverage}~\cite{vovk2005algorithmic, angelopoulos2021gentle, 004051271e0b4944884b98180eb898c1, stutz2023conformal}. It uses \emph{nonconformity scores} to measure the agreement between a sample and the model (a large value indicates poor agreement) and calibrates uncertainty through the size of the prediction set, called \emph{inefficiency}, where larger sets indicate higher uncertainty. CP only assumes that samples are exchangeable and drawn from the same distribution, so it imposes no further restrictions on the data distribution or on the types of models. These properties make CP a natural tool for assessing the quality of inference models. 

There are various CP methods such as classical CP~\cite{shafer2008tutorial} and inductive CP~\cite{papadopoulos2002inductive, lei2017distributionfreepredictiveinferenceregression}. Classical CP requires retraining the model for each sample, which is computationally expensive. Inductive CP avoids this by splitting data into training, calibration, and testing sets: training the model once, then using calibration set to construct prediction sets for testing set, thereby avoiding repeated training. These CP methods are commonly applied after training as a calibration step, so it neither influences the learning process nor improves the model~\cite{angelopoulos2021gentle}. To address this limitation, Stutz et al.~\cite{stutz2021learning} proposed conformal training (\conftr), which incorporates CP directly into training to jointly optimize accuracy and inefficiency, yielding smaller prediction sets with improved coverage~\cite{einbinder2022training}. However, its application to the inference models requires two significant considerations. First, traditional classifiers generally output class probabilities, so nonconformity scores can be obtained simply by taking the difference between the predicted probability and the ground-truth label. STL inference frameworks, however, produce real-valued robustness scores indicating satisfaction or violation of the learned formula. Since there is no ground-truth robustness and no natural way to map robustness values to class probabilities, standard error‑based nonconformity measures are not sufficient. Second, \conftr\ explicitly constructs prediction sets during training and focuses on minimizing their size. This objective ignores the possibility of producing an empty set, whereas the desired set is of size one.

Despite the significant progress of STL inference and conformal prediction, their integration remains relatively unexplored. Existing work that combines STL with CP targets trajectory prediction rather than inference. For example, Lindemann et al.~\cite{lindemann2023conformal} construct prediction regions for future system states by evaluating the robustness of a pre‑defined STL specification. Cleaveland et al.~\cite{cleaveland2024conformal} extend CP to long‐horizon forecasting of time‑series data. Soroka et al.~\cite{soroka2024learning} focus on inferring STL predicates with a confidence interval derived from conformalized quantile regression, but their method is restricted to simple predicates which lack expressivity, and provides statistical guarantees only for future trajectories rather than for the satisfaction or violation of a learned formula on observed data.

\textbf{\textit{Contributions}} Our contributions can be summarized as follows: \textit{(i)} We design a new nonconformity score that operates directly on robustness values, even though ground‑truth robustness is unavailable, and develop smooth approximations that make this score suitable for training. \textit{(ii)} We extend \conftr\ based on \textit{(i)} to handle STL inference models and provide a comparison with our proposed method. \textit{(iii)} Leveraging inductive CP, we develop a smooth approximation of the CP procedure without explicit construction of prediction sets, allowing joint optimization of the inference model and its conformal wrapper; after training, non‑smoothed CP algorithm is applied to provide rigorous statistical guarantee on unseen data. \textit{(iv)} We propose a novel loss function that combines the classification error and CP goal in a single term, unlike \conftr, which relies on a separate regularization term. Our formulation requires no additional tuning and naturally avoids empty prediction sets. \textit{(v)} In experiments, we show that our method improves both efficiency and coverage compared with state‑of‑the‑art conformal training baselines.

\section{Preliminaries}
\subsection{Signal Temporal Logic}\label{sec:STL}
Signal Temporal Logic (STL)~\cite{maler2004monitoring, fainekos2009robustness} is an expressive formal language that can describe spatial-temporal characteristics of time-series data. Let $X=\vect{x_0, x_1,\cdots,x_T}$ be a discrete, finite signal, where $x_t\in \mathbb{R}^d$ is the state of $X$ at time $t$. The syntax of STL is recursively defined as:
\begin{equation}\label{eq:stl}
    \phi \Coloneqq \top \mid \mu \mid \neg \phi \mid \phi_1\land\phi_2 \mid \phi_1\lor\phi_2 \mid \F_I\phi \mid \G_I\phi \mid \phi_1 U_I \phi_2,
\end{equation}
where $\phi_1$ and $\phi_2$ are STL formulas, $\mu$ is a predicate, $\mu:=a^\top x_t\geq b$, $\neg$(\emph{Negation}), $\land$ (\emph{And}) and $\lor$ (\emph{Or}) are Boolean operators, $\F$ (\emph{Eventually}), $\G$ (\emph{Always}) and $U$(\emph{Until}) are temporal operators, $I$ represents the time interval.

The quantitative semantics of STL is the robustness $\rho^{\phi}(X,t)\in\mathbb{R}$ that indicates how robustly $\phi$ is satisfied or violated\cite{donze2010robust,fainekos2009robustness}. If $\rho^{\phi}(X,t)> 0$, it holds that
$X$ satisfies $\phi$ at time $t$, denoted as $(X,t)\models\phi$. The larger $\rho^{\phi}(X,t)$ is, the more robustly $\phi$ is satisfied.

\textbf{\textit{STL Inference}} Signal Temporal Logic Inference aims to learn an STL formula $\phi$ from a labeled dataset $\mathcal{S} := \{(X^{(1)},Y^{(1)}),\cdots,(X^{(N)},Y^{(N)})\}$ with the label $Y^{(i)}\in\{1,-1\}$. Here $Y^{(i)}=1$ indicates that the signal $X^{(i)}$ exhibits a desired behavior, whereas $Y^{(i)}=-1$ indicates the opposite. During training, $\phi$ is treated as a classifier: for each $(X^{(i)},t)$, we predict $\tilde{Y}^{(i)}=1$ if $\rho^{\phi}(X^{(i)},t)> 0$, otherwise we predict $\tilde{Y}^{(i)}=-1$. Equivalently, $(X^{(i)},t)\models \phi \iff \tilde{Y}^{(i)}=1$.

\subsection{Conformal Prediction}\label{sec:CP}
Conformal Prediction (CP)~\cite{vovk2005algorithmic,shafer2008tutorial} is a framework for generating prediction sets for any model with statistical guarantees. In our work, we focus on a binary classification task with label set $\mathcal{K} = \{1, -1\}$; we adopt \emph{Inductive CP} (ICP, \cite{004051271e0b4944884b98180eb898c1}), in which the dataset $\mathcal{S}$ is partitioned into a training set $\mathcal{S}_{\text{train}}$, a calibration set $\mathcal{S}_{\text{cal}}$ and a test set $\mathcal{S}_{\text{test}}$. A classifier $\pi$ is first trained on $\mathcal{S}_{\text{train}}$. The calibration set $\mathcal{S}_{\text{cal}}$ is then used to evaluate the performance on the unseen test set $\mathcal{S}_{\text{test}}$. Specifically, given a trained classifier $\pi$ and a calibration set $\mathcal{S}_{\text{cal}} = \{(X_i, Y_i)\}_{i=1,...,n}$, CP constructs a prediction set $C(X) \subseteq \mathcal{K}$ for a test input $(X,Y) \in \mathcal{S}_{\text{test}}$ such that
\begin{equation}
    P(Y\in C(X))\geq 1-\alpha,
\end{equation}
where $(X,Y)$ and calibration data are drawn exchangeably from the same distribution, and $\alpha \in [0,1]$ is a user-specified confidence level. This coverage guarantee ensures that the true label lies within the prediction set (i.e., it is covered) with probability at least $1 - \alpha$. 


In our work, we adopt ICP in two formulations. Both utilize the same real-valued nonconformity score $E_{\pi}(X, k)$ computed from the trained classifier $\pi$, but differ in how quantiles are parametrized. In the first formulation, the prediction set is constructed using a threshold:
\begin{equation}\label{eq:C}
    C(X) = \{k\in\mathcal{K}: E_{\pi}(X,k)\leq \tau\},
\end{equation}
where $\tau$ is the $\lceil(1-\alpha)(n+1)\rceil/n$ quantile of the nonconformity scores on the calibration set, i.e., $\{E_{\pi}(X_i,Y_i)\}_{i=1,...,n}$. The second formulation constructs the prediction set based on a $p$-value for each potential label $k \in \mathcal{K}$. The $p$-value is defined as:
\begin{equation}\label{eq:p}
    p_k = \frac{|\{i\in\{1,...,n\}:E_{\pi}(X_i,Y_i)\geq E_{\pi}(X,k)\}|+1}{n+1}.
\end{equation}
The prediction set is then given by:
\begin{equation}\label{eq:cp pvalue}
    C(X)=\{k\in\mathcal{K}:p_k>\alpha\}.
\end{equation}
Theoretically, both formulations are equivalent~\cite{stutz2023conformal}; for a fixed $\alpha$ they will yield identical prediction sets.

\section{Problem Statement}
Given a dataset $\mathcal{S}$, our goal is to learn the optimal parameters $\theta$ of an STL inference classifier $\pi_{\theta}$ such that:
\begin{enumerate}
    \item The inferred STL formula $\phi_{\theta}$ minimizes the misclassification rate (MCR), defined as:
    \begin{equation}
    \begin{aligned}
        \text{MCR} = &\frac{1}{N}(|X^{(i)} \models \phi_{\theta} \land Y^{(i)} = -1| \\
        &+ |X^{(i)} \not\models \phi_{\theta} \land Y^{(i)} = 1|).
    \end{aligned}
    \end{equation}
    \item The resulting prediction sets provide a valid coverage guarantee, i.e., given a confidence level $\alpha$, for all test samples $(X,Y)$,
    \begin{equation}
        P(Y\in C_{\theta}(X))\geq 1-\alpha,
    \end{equation}
    where $C_{\theta}(X)$ is the prediction set induced by $\pi_{\theta}$ using conformal prediction after training.
    \item The uncertainty, measured by the inefficiency, is minimized across test inputs.
\end{enumerate}


\section{Nonconformity Score for STL Inference}
STL inference models differ from conventional classifiers in that they output a real-valued robustness score indicating how strongly a signal satisfies a learned formula. This quantity is not, however, an immediate candidate for a non-conformity score. The challenge is that there is no ground-truth robustness for each sample, only the sign is important; as a result, a small robustness value does not always signify higher predictive uncertainty (it depends on the distribution of robustness values for other data points). To overcome this difficulties, we introduce a dedicated nonconformity score specifically designed for robustness-based STL inference.

\subsection{STL margin}
Before defining the nonconformity score, we review the concept of the \emph{margin} for STL inference \cite{li2024multi}, which is inspired by the notion of margin in Support Vector Machines (SVMs)~\cite{cortes1995support}. In SVM, a hyperplane separates data points from different classes. The margin refers to the distance between this hyperplane and the closest data points from each class, and the goal is to maximize the margin.

For STL inference problem, we treat the inferred STL formula $\phi_{\theta}$ as a mapping from input signals to a point, i.e., the robustness. From Section \ref{sec:STL}, we consider the origin as the separating hyperplane, and the robustness can be interpreted as a signed distance to this decision boundary. 
We formalize the STL margin as follows:
\begin{definition}[Binary-class STL margin]\label{def:STL margin}
Given a labeled dataset $\mathcal{S}$ and the learned classifier $\pi_{\theta}$, the margin of the inferred formula $\phi_{\theta}$ respect to $\mathcal{S}$ is defined as:
\begin{equation}\label{eq:margin}
\begin{aligned}
    m_{\pi_{\theta},\mathcal{S}} = \min_{i} \{\text{ReLU}(\pi_{\theta}(X^{(i)})\cdot Y^{(i)})\},\forall i\in\{1,\ldots,N\},
\end{aligned}
\end{equation}
where $\text{ReLU}(x)=\max (0,x)$.
\end{definition}

Intuitively, Section \ref{eq:margin} defines the margin as the minimum distance from the origin to the robustness of correctly classified data points; this definition treats the distances for the positive and negative classes in a symmetric way. Specifically, as illustrated in Figure~\ref{fig:1dmargin}, the margin is given by $m_{\pi_{\theta},\mathcal{S}}=\min\{m_p,m_n\}$. Thus, maximizing the margin helps to balance the decision boundary, preventing it from being biased toward one class, and thereby achieving a better separation between the two classes.

\begin{figure}[ht]
    \centering
    \includegraphics[width=1.0\linewidth]{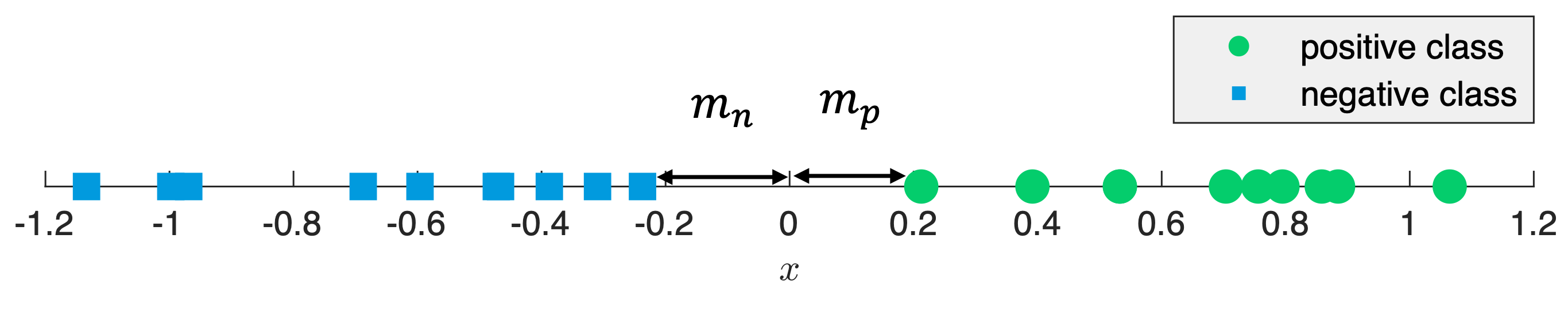}
    \caption{An example of binary-class STL margin.}
    \label{fig:1dmargin}
\end{figure}

\subsection{Nonconformity Score}\label{sec:score}
In this section, we propose a nonconformity score leveraging the STL margin.
To compute the margin, we follow the strategy used in \cite{cleaveland2024conformal}, and divide the calibration set $\mathcal{S}_{\text{cal}}$ into two subsets $\mathcal{S}_{\text{cal1}},\mathcal{S}_{\text{cal2}}$. The first set $\mathcal{S}_{\text{cal1}}$ is used to compute the margin $m_{\pi_{\theta},\mathcal{S}_{\text{cal1}}}$. For brevity, we use $m$ to denote $m_{\pi_{\theta},\mathcal{S}_{\text{cal1}}}$ in the following paragraph. Using this margin, we define the nonconformity score for each calibration sample $(X_i,Y_i)\in\mathcal{S}_{\text{cal2}}$ as:
\begin{equation}\label{eq:score}
    E_{\pi}(X_i,Y_i) = \begin{cases}
    M,  &\textrm{if } \pi(X_i)>m,Y_i=-1,\\
    M,  &\textrm{if } \pi(X_i)<-m,Y_i=1,\\
    1,  &\textrm{if } -m<\pi(X_i)\leq m,Y_i=-1,\\
    1,  &\textrm{if } -m\leq\pi(X_i)< m ,Y_i=1,\\
    0,  &\textrm{if } \pi(X_i)\leq -m,Y_i=-1,\\
    0,  &\textrm{if } \pi(X_i)\geq m,Y_i=1,\\
    \end{cases}
\end{equation}
where $M\in(1,\infty]$ is a user-defined large value. Note that the exact value of $M$ does not affect the resulting prediction set, only the ordering $0<1<M$ matters. This formulation is inspired by the nonconformity score used for SVMs~\cite{shafer2008tutorial}.

The intuition is as follows: First, because the ground-truth robustness is unknown, we use the learned margin $m$ to measure the separation and assign piecewise-constant scores by region: correct side outside the margin (lowest nonconformity), within the margin (intermediate), and wrong side outside the margin (highest). Second, since robustness magnitudes depend on measurement units, the score should be consistent under uniform rescaling; only the sign of $\pi_{\theta}(X)$ and whether $|\pi_{\theta}(X)|$ exceeds $m$ should matter. For example, in the predicates of \eqref{eq:stl}, scaling $a$ and $b$ by any positive constant preserves the sign of the robustness and thus the classification, so the score must reflect this invariant.

\begin{lemma}\label{lemma:invariant}
The proposed nonconformity scores are invariant to changes of measurement units (equivalently, to positive rescalings of the robustness).
\end{lemma}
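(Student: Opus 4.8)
The plan is to read ``change of measurement units'' as a positive rescaling of the robustness, $\pi(X)\mapsto c\,\pi(X)$ with $c>0$ --- the equivalence asserted in the statement --- and then to check that every quantity entering \eqref{eq:score} transforms consistently, so that the region into which each sample falls is unchanged. This rescaling is precisely what the predicate example above produces: scaling $(a,b)$ by $c>0$ in a predicate $a^\top x_t\ge b$ multiplies its robustness $a^\top x_t-b$ by $c$, and because the quantitative semantics of the remaining operators is assembled only from $\min$, $\max$, negation, and time shifts --- each positively homogeneous of degree one --- the full robustness $\pi$ is multiplied by the same factor $c$. It therefore suffices to prove that $E_\pi$ is unchanged under $\pi\mapsto c\,\pi$.

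The key step is to track how the margin $m$ and the six guard conditions respond. First, since $\text{ReLU}(cz)=c\,\text{ReLU}(z)$ for $c>0$ and $\min$ commutes with multiplication by a positive constant, the margin of \eqref{eq:margin} is positively homogeneous, so $m\mapsto c\,m$. Second, each case in \eqref{eq:score} is selected by comparing $\pi(X_i)$ against $\pm m$; after rescaling, every such test becomes a comparison between $c\,\pi(X_i)$ and $\pm c\,m$, which is equivalent to the original upon dividing by $c>0$. Dividing by a positive constant preserves both strict and non-strict inequalities --- for instance $-m<\pi(X_i)\le m$ and $-m\le\pi(X_i)<m$ are each mapped to themselves --- so the selected region, and hence the assigned value in $\{0,1,M\}$, is identical before and after scaling. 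Equivalently, $E_\pi$ depends on $\pi(X_i)$ and $m$ only through the scale-invariant ratio $\pi(X_i)/m$ together with the label $Y_i$.

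Concluding, since each per-sample score is unchanged, the whole calibration vector $\{E_\pi(X_i,Y_i)\}$, its quantile $\tau$, and therefore the threshold-based prediction set \eqref{eq:C} --- and likewise the $p$-value set, which depends only on pairwise comparisons of scores --- are all invariant. The single point that genuinely needs care, and what I would regard as the main (if modest) obstacle, is establishing that $m$ and the evaluated robustness rescale by the same constant: a genuine change of units acts uniformly on all signals, so both the margin-defining set $\mathcal{S}_{\text{cal1}}$ and the evaluation set $\mathcal{S}_{\text{cal2}}$ are scaled identically, which is exactly what renders the ratio $\pi(X_i)/m$ --- and with it the score --- invariant. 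The boundary cases $\pi(X_i)=\pm m$ require only a one-line verification that the inequality directions survive division by $c>0$; the remainder is routine.
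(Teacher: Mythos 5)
Your proof is correct and follows essentially the same route as the paper's: model the unit change as $\pi\mapsto c\,\pi$ with $c>0$, observe that the margin from Definition~\ref{def:STL margin} then scales as $m\mapsto cm$, and note that positive scaling preserves all the (strict and non-strict) inequalities selecting the cases of \eqref{eq:score}, so the assigned score is unchanged. Your additional checks --- the positive homogeneity of the STL semantics and of the $\mathrm{ReLU}$/$\min$ in the margin, and the downstream invariance of the prediction sets --- are correct elaborations of steps the paper leaves implicit.
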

\begin{proof}
  Let $c>0$ represent a change of scale (i.e., a change of unit) from the robustness $\pi_{\theta}(X)$ to the robustness $\pi'_{\theta}(X)=c\pi_{\theta}(X)$. From Section \ref{def:STL margin}, the corresponding margin is then $m'=cm$. Multiplication by a positive constant preserves all inequalities in \eqref{eq:score}, so $\pi_{\theta}(X)\gtrless\pm m$ if and only if $\pi'_{\theta}(X)\gtrless\pm m'$; as a result, the assigned score is unchanged, i.e., $E_{\pi'}(X,Y) = E_{\pi}(X,Y)$.
\end{proof}

\section{Differentiable Conformal Predictor for STL Inference}\label{sec:diff cp}
Many existing STL inference frameworks learn to predict specification satisfaction during training, and then apply CP post-hoc to obtain coverage guarantees. However, these frameworks are not explicitly informed about the conformalization procedure during training. We aim to improve CP performance by incorporating it directly into the training process, such that the learned inference classifier $\pi_{\theta}$ is optimized not only for classification accuracy but also for reduced inefficiency. To this end, we propose a differentiable Temporal Logic Inference Conformal Predictor (\tlicp), adopting the concept of conformal training from \cite{stutz2021learning}, in which a differentiable conformal predictor is trained jointly with the model. Unlike prior work, \tlicp\ introduces a novel differentiable prediction process and a novel loss function. Training such a conformal predictor requires both differentiable nonconformity score and prediction process with respect to the model parameters $\theta$.

\subsection{Differentiable Nonconformity Score}
The nonconformity score introduced in \eqref{eq:score} is not differentiable and therefore cannot be directly used in the training process. To address this, we propose a differentiable approximation of the nonconformity score in \eqref{eq:diff score} using the sigmoid function $\sigma(x)=1/(1+e^{-x})$ as:
\begin{equation}\label{eq:diff score}
\begin{aligned}
    \tilde{E}_{\theta}(X_i,Y_i) = &f_1(\pi_{\theta}(X_i)Y_i)f_2(\pi_{\theta}(X_i)Y_i) + Mf_3(\pi_{\theta}(X_i)Y_i),
\end{aligned}
\end{equation}
where
\begin{equation}
\begin{aligned}
    &f_1(x) = \sigma(-(x-m)/T_1),\\
    &f_2(x) = \sigma((x+m)/T_2),\\
    &f_3(x) = \sigma(-(x+m)/T_3),\\
\end{aligned}
\end{equation}
and $\{T_i\}_{i=1,2,3}$ are temperature hyperparameters that control the steepness of the curve. The first term $f_1\cdot f_2$ smoothly approximates the indicator function for $\pi_{\theta}(X_i)Y_i\in [-m,m]$, assigning a score close to $1$ within the margin and near $0$ otherwise. The second term $M\cdot f_3$ is close to $M$ if $\pi_{\theta}(X_i)Y_i>m$ and close to 0 otherwise. Figure \ref{fig:score} shows an example of the approximated nonconformity score $\tilde{E}_{\theta}(X,Y)$ with respect to $\pi_{\theta}(X)$ and $Y$.

\begin{figure}
    \centering
    \includegraphics[width=1\linewidth]{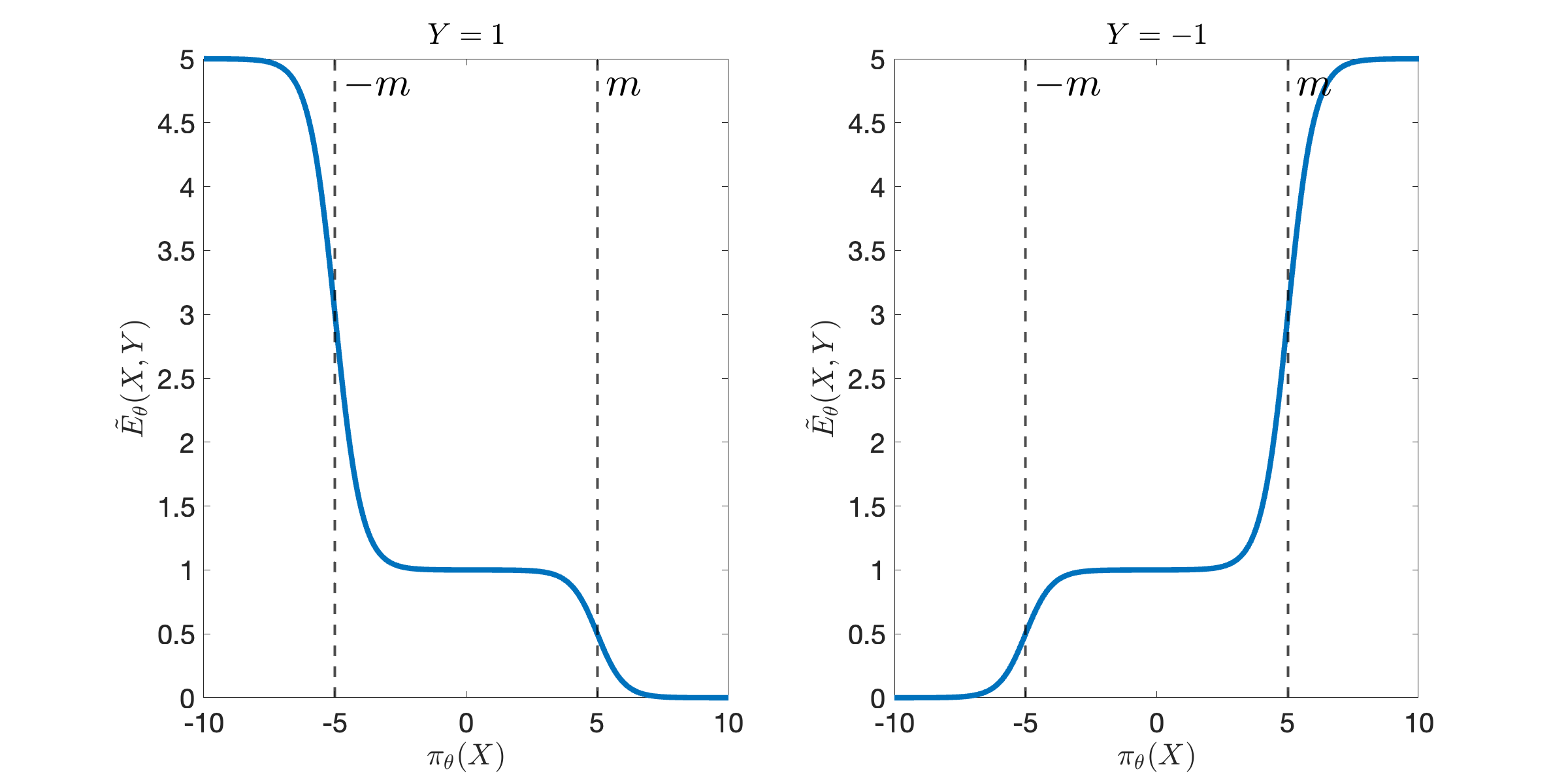}
    \caption{The approximated nonconformity score $\tilde{E}_{\theta}(X,Y)$ with $m=5$, $M=5$, $T_1 = T_2 = T_3 = 0.5$.}
    \label{fig:score}
\end{figure}

\subsection{\conftr{} using differentiable thresholds}
Stutz et al.~\cite{stutz2021learning} has introduced \conftr, a conformal-training scheme that builds prediction sets from a differentiable threshold on nonconformity scores. We briefly outline its calibration and prediction procedures below.

\textbf{\textit{Calibration}} A smooth quantile approximation, implemented with differentiable sorting techniques~\cite{blondel2020fastdifferentiablesortingranking,cuturi2019differentiable} computes an approximated threshold $\tilde{\tau}$ being differentiable w.r.t. the model’s parameters $\theta$. This step is denoted as $\tilde{\tau} = \text{SMOOTHCAL}(\{\pi_{\theta}(X_i),Y_i)\}_{i=1,...,n},\alpha)$.

\textbf{\textit{Prediction}} Given a test point $X$,  the soft inclusion of a candidate label $k$ in the prediction set is
\begin{equation}
  \label{eq:C smooth}
  C_{\theta,k}(X;\tilde{\tau})=\sigma((\tilde{\tau}-\tilde{E}_{\theta}(X,k))/T_c)\in [0,1],
\end{equation}
where $T_c$ is a temperature hyperparameter. This operation, which is the smooth counterpart of \eqref{eq:C}, is denoted as $C_{\theta,k}(X;\tilde{\tau}) = \text{SMOOTHPRED}(\pi_{\theta}(X),\tilde{\tau})$.

We extend \conftr{} to inference models by incorporating the differentiable nonconformity score from Section \ref{eq:diff score}, thereby extending conformal training to the temporal-logic setting.

\subsection{\tlicp{} using $p$-values}

Although \conftr{} introduces differentiability into conformal prediction, its two-stage design requires explicit thresholding and prediction set construction, adding complexity to both training and optimization.
Our proposed method, \tlicp, relies solely on $p$-values, eliminating separate calibration and prediction stages.
For a test sample $X$ and candidate label $k$, the soft $p$-value is
\begin{equation}
    p_{\theta}(X;k) = \frac{\sum_{i=1}^n \sigma\big((\tilde{E}_{\theta}(X,k)-\tilde{E}_{\theta}(X_i,Y_i))/T_p\big)+1}{n+1},
\end{equation}
where $T_p$ is a temperature hyperparameter. This operation is the smooth counterpart of \eqref{eq:p}, and is denoted by $p_{\theta}(X;k) = \text{DIFFP}(\tilde{E}_{\theta}(X,k),\{\tilde{E}_{\theta}(X_i,Y_i)\}_{i=1,...,n})$.
By collapsing calibration and prediction into a single step, \tlicp{} simplifies the training process and loss formulation, and improves computational efficiency. The next section details how these soft $p$-values are incorporated into our overall optimization.

\section{Learning Conformal Prediction for STL Inference}
The goal of training a conformal predictor is to allow gradients to flow through the entire prediction procedure. To this end, each training batch $B\subset\mathcal{S}_{\text{train}}$ is split into two equal parts: a calibration subset $B_{\text{cal}}$ and a test subset $B_{\text{test}}$, such that $B = B_{\text{cal}}\uplus B_{\text{test}}$. The classifier $\pi$ is updated on the full batch, and the differentiable conformal predictor from Section \ref{sec:diff cp} is evaluated on $B_{\text{cal}}$ and $B_{\text{test}}$. After training, we apply the non-smooth CP procedure on the separate calibration and test sets, $\mathcal{S}_{\text{cal}}$ and $\mathcal{S}_{\text{test}}$. In this section, we describe in detail the loss function that is optimized during this process.

\subsection{\conftr{} with Inefficiency Regularization}\label{sec:ConfTr}
The objective of \conftr{} \cite{stutz2021learning} is to reduce the inefficiency, i.e., the size of the prediction set.
Ideally, each set should contain only the ground-truth label, resulting in an expected size of $1$; this is captured in a corresponding CP regularizer defined as:

\begin{equation}
    \mathcal{L}_{tr} = \frac{1}{|B_{test}|}\sum_{X\in B_{test}} \text{ReLU}\big(\sum_{k=1,-1}C_{\theta,k}(X;\tilde{\tau})-1\big).
\end{equation}
Recall that $C_{\theta,k}(X;\tilde{\tau})$ is considered as a soft assignment of class $k$ to the confidence set. However, minimizing the CP loss does not guarantee that the assigned label is correct. Therefore, a standard binary classification loss $\mathcal{L}_{c}$ is necessary as well. In our experiments, we adopt the STL-specific loss from \cite{li2024tlinet}. The total loss is a weighted sum of classification loss and CP regularization:
\begin{equation}\label{eq:standard loss}
    \mathcal{L} = \mathcal{L}_{c} + \lambda \mathcal{L}_{tr},
\end{equation}
where $\lambda\in\mathbb{R}$ adjusts the influence of the CP loss during training. The complete training procedure for \conftr{} is outlined in Section \ref{alg:standard}.

There are several limitations in \conftr’s loss design.
First, directly minimizing inefficiency is problematic. The ideal set size is one, yet the loss does not penalize sizes below one.\footnote{Intuitively, inefficiency can drop below one when CP knows that the result of the predictor is false, leading to an empty set~\cite{shafer2008tutorial}.
Such case is common in binary classification, especially at low confidence level $\alpha$.}
Second, the overall objective is a weighted sum, which requires careful tuning of the hyperparameter $\lambda$. Selecting $\lambda$ through cross-validation adds computational overhead and noticeably affects coverage and efficiency. These issues motivate our proposed $p$-value based loss used in \tlicp, introduced next.

\begin{algorithm}
\caption{\conftr{} with Inefficiency Regularization}
\label{alg:standard}
\begin{algorithmic}[1]
\Require $\alpha\in[0,1]$, $\lambda\in\mathbb{R}$.
\For{batch $B$}
    \State randomly split $B = B_{\text{cal}}\uplus B_{\text{test}}$.
    \For{$\{X_i,Y_i\}\in B_{\text{cal}}$}
    \State compute the nonconformity scores $\tilde{E}_{\theta}(X_i,Y_i)$.
    \EndFor
    \State $\tilde{\tau} = \text{SMOOTHCAl}(\{\pi_{\theta}(X_i),Y_i)\}_{i=1,...,n},\alpha)$.
    \For{$k\in\{1,-1\}$}
    \For{$\{X,Y\}\in B_{\text{test}}$}
    \State $C_{\theta,k}(X;\tilde{\tau}) = \text{SMOOTHPRED}(\pi_{\theta}(X),\tilde{\tau})$.
    \EndFor
    \EndFor
    \State $\mathcal{L} = \mathcal{L}_{c} + \lambda \mathcal{L}_{tr}$.
    \State update parameters $\theta$.
\EndFor
\end{algorithmic}
\end{algorithm}

\subsection{\tlicp{} with $p$-Value Optimization}
To address the limitations above, we propose a new loss function focusing on optimizing $p$-values instead of inefficiency. The objective is to maximize the $p$-value of the true label so it is included in the prediction set, while simultaneously minimize the $p$-value of false label so that it is excluded. The corresponding CP loss is
\begin{equation}\label{eq:p loss}
    \mathcal{L}_{cp} = \frac{1}{|B_{test}|}\sum_{(X,Y)\in B_{test}} (p_{\theta}(X;-1)Y-p_{\theta}(X;1)Y).
\end{equation}
When $Y=1$, the CP loss encourages $p_{\theta}(X;1)$ to increase and $p_{\theta}(X;-1)$ to decrease. Since larger $p$-values corresponds to smaller nonconformity scores, minimizing $\mathcal{L}_{cp}$ pushes robustness toward the correct side of the decision boundary, thereby reducing the MCR.

A notable property of this loss is its independence from the confidence level $\alpha$. In \conftr, $\alpha$ must be fixed during training to compute a threshold $\tau$, which ties optimization to that specific level and can halt progress once the predicted set reaches size one. Our $p$-value loss continues to encourage larger true-label $p$-values and smaller false-label $p$-values regardless of $\alpha$. Any desired $\alpha$ can then be selected post hoc with non-smooth CP, without retraining. Section \ref{alg:inductive} summarizes the full procedure.

\begin{remark}
If a specific $\alpha$ must be enforced during training, an alternative is
\begin{equation}
  \label{eq:p loss alpha}
  \begin{aligned}
    \mathcal{L}_{cp}^{(\alpha)} &= \frac{1}{|B_{test}|}\sum_{(X,Y)\in B_{test}} \big(\text{ReLU}(Y(\alpha-p_{\theta}(X;1)))\\
                        &+ \text{ReLU}(Y(p_{\theta}(X;-1)-\alpha)\big).
  \end{aligned}
\end{equation}
Note that for \emph{hard} examples where $p_{\theta}(X;Y)<\alpha$ and $p_{\theta}(X;-Y)>\alpha$, the corresponding terms in \protect\eqref{eq:p loss alpha} are the same as \eqref{eq:p loss}; however, Section \ref{eq:p loss alpha} does not explicitly penalize \emph{easy} examples.
\end{remark}

The total loss $\mathcal{L}$ is:
\begin{equation}
    \mathcal{L} = \mathcal{L}_{cp}.
\end{equation}
Compared to the loss in \eqref{eq:standard loss}, our loss function uses a single term to simultaneously promotes prediction accuracy and the size of confidence sets. This simplifies the training pipeline, reduces computational cost, and removes the need to tune the weighting hyperparameter $\lambda$.


\begin{algorithm}
\caption{\tlicp{} with $p$-Value Optimization}
\label{alg:inductive}
\begin{algorithmic}[1]
\Require $\lambda\in\mathbb{R}$.
\For{batch $B$}
    \State randomly split $B = B_{cal}\uplus B_{test}$.
    \For{$\{X_i,Y_i\}\in B_{cal}$}
    \State compute the nonconformity scores $\tilde{E}_{\theta}(X_i,Y_i)$.
    \EndFor
    \For{$k\in\{1,-1\}$}
    \For{$\{X,Y\}\in B_{test}$}
    \State $p_{\theta}(X;k) = \text{DIFFP}($\\
    \quad \quad \quad \quad \quad$\{\tilde{E}_{\theta}(X,k)\},\{\tilde{E}_{\theta}(X_i,Y_i)\}_{i=1,...,n})$.
    \EndFor
    \EndFor
    \State $\mathcal{L} = \mathcal{L}_{cp}$.
    \State update parameters $\theta$.
\EndFor
\end{algorithmic}
\end{algorithm}

In the limit of small temperature and dispersion parameters ($T \rightarrow 0$) and given sufficiently large calibration sets $B_{\text{cal}}$, our differentiable conformal prediction method aims to achieve the specified coverage level $1-\alpha$ on the test set $B_{\text{test}}$. Empirically, we observe that practical implementations indeed yield coverage close to this theoretical limit. Importantly, these differentiable (smooth) approximations are employed exclusively during model training to facilitate gradient-based optimization. Once the model parameters have been learned, non-smooth conformal predictors are used to generate prediction sets, ensuring strict validity of the coverage guarantee.

\section{CASE STUDIES}
In this section, we empirically evaluate \tlicp{} for STL inference and quantify its effect on the learned formula $\phi$. We apply \tlicp{} to infer STL formulas from trajectories that complete spatial-temporal tasks in the VIMA dataset~\cite{jiang2022vima}. Our implementation builds on the STL inference network TLINet~\cite{li2024tlinet}: TLINet models $\phi$ as a predictor, while \tlicp{} is embedded in the training process to differentiate through conformal prediction. We compare against two methods: TLINet without conformal training (Baseline) and TLINet with \conftr~\cite{stutz2021learning}.

\subsection{Dataset Setup}
To evaluate our conformal prediction framework for STL inference, we conduct experiments on the VIMA dataset, which features multimodal robot manipulation tasks with spatial-temporal requirements that can be described using STL formulas.

We instantiate two representative tasks in VIMA-BENCH:
\begin{enumerate}[label=Task \arabic*,leftmargin=0.2in,itemindent=0.25in]
  \item\label{task:reach}\textit{Place a block into the basket without violating the constraint}; this is a classical reach-avoid task.
  \item\label{task:sequence}\textit{First place block A, then place block B into the basket without violating the constraint}; this task is similar to \ref{task:reach}, but requires also temporal reasoning about the sequence of blocks.
\end{enumerate}
Each sample is a robot trajectory attempting to satisfy the task objective. A trajectory is labeled as positive if the objective is successfully achieved; otherwise, it is labeled as negative (e.g., the block misses the basket, a constraint is violated, or the required order is not followed). Following prior work, we use the end-effector $(x,y)$ coordinates as a trajectory for STL inference, discarding image and other modalities for simplicity and consistency. For each task, we collect 2000 trajectories and randomly split them into training set ($S_{\mathrm{train}}$, 60\%), calibration set ($S_{\mathrm{cal}}$, 20\%), and testing set ($S_{\mathrm{test}}$, 20\%). The trajectories in Task 1 are two-dimensional with a length of 20, while those in Task 2 are four-dimensional with two-dimensional positions for block A and block B, and have a length of 40. The environment setup and example trajectories are illustrated in Figures~\ref{fig:dataset}.

\begin{figure}[ht]
    \subfloat[VIMA simulated environment: the robot arm picks up a colored block from the table and places it into the designated basket region (red-white boundary) without violating the constraint (yellow bar).\label{fig:vima}]{\includegraphics[width=0.23\textwidth]{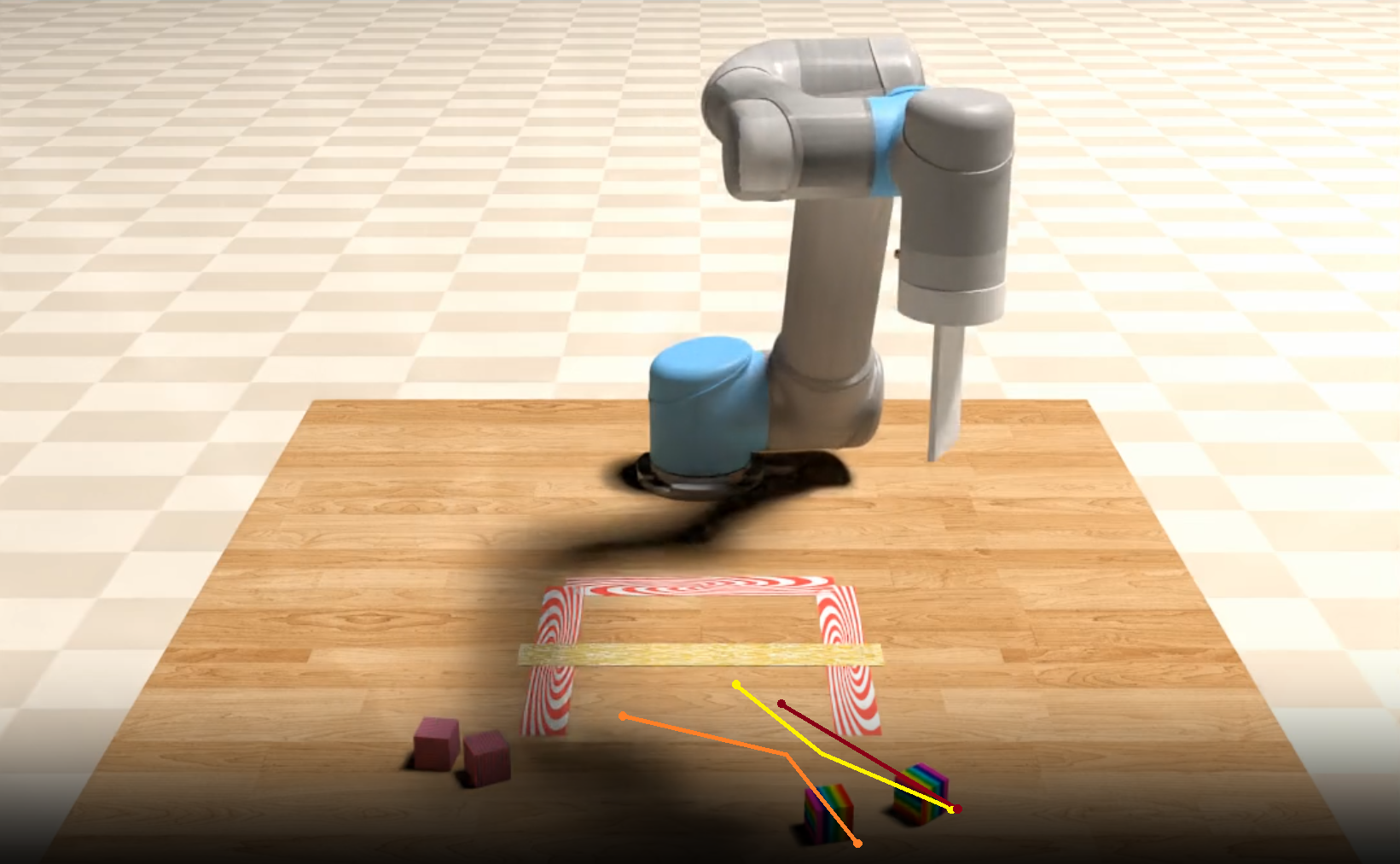}}
    \hfill
    \subfloat[Solid lines indicate successful placements (endpoints inside the basket), while dashed lines indicate failures. The green box is the valid region (basket and constraint).\label{fig:pos_trajs}]{\includegraphics[width=0.23\textwidth]{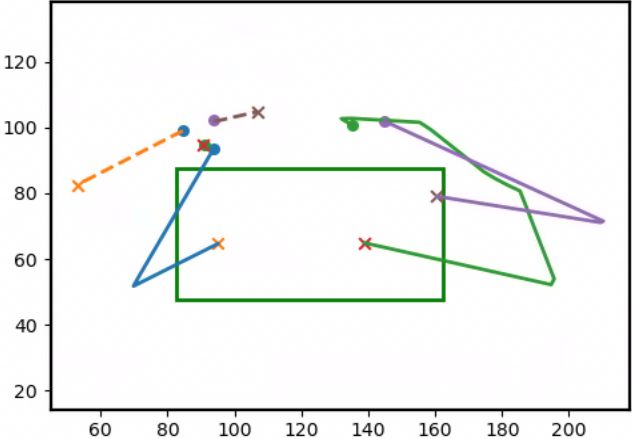}}
    \caption{VIMA dataset. (a) Task environment setup. (b) Representative trajectories demonstrating whether blocks are placed into the basket.}
    \label{fig:dataset}
\end{figure}

\subsection{Task 1: Placing a block into a basket}
In the first part, we evaluate STL inference on trajectories for a pick-and-place task: place a block into the basket without violating state constraints. We compare \tlicp{} with \conftr{} and the \emph{Baseline}. All methods use the nonconformity scores defined in Section \ref{sec:score}. \tlicp{} is independent of the confidence level $\alpha$, thus it is trained only once. \conftr{} requires retraining for each different value of $\alpha$; in our case we train five models with $\alpha\in\{0.1,0.05,0.01,0.005,0.001\}$. After training, prediction sets are computed with non-smooth CP as the confidence level $\alpha$ varies from $0.001$ to $0.1$ (in steps of $0.001$). We report the inefficiency and misclassification rate (MCR); the results are shown in Figure~\ref{fig:task_results} and Table~\ref{tab:task1_results}.

Figure~\ref{fig:task_results} plots inefficiency. From a CP perspective, a model is better if it maintains singleton sets down to a lower $\alpha$ (equivalently, has a lower $\alpha$ at which the curve first rises above 1 toward 2), implying size-1 prediction sets at a higher coverage. \tlicp{} attains an average set size of 1 for $\alpha$ as low as $0.005$, matching the lowest level reached by \conftr. Thus, \tlicp{} matches \conftr’s best performance without fixing $\alpha$ a priori. \conftr{} sometimes yields set sizes below 1 because its loss does not penalize sizes under 1, as discussed in Section~\ref{sec:ConfTr}. At $\alpha=0.1$ and $0.05$, \conftr’s inefficiency is worse than the Baseline, it is likely because once size 1 is easily achieved, the loss continues pushing set size below 1 instead of improving the classifier. While at the very small $\alpha=0.001$, \conftr{} again underperforms the Baseline, likely because the coverage target is too stringent for the model and degrades training. Overall, \conftr{} is sensitive to $\alpha$ and requires costly cross-validation, whereas \tlicp{} trains independently of $\alpha$, avoids this tuning, and still matches \conftr’s best performance.

Table~\ref{tab:task1_results} reports the MCRs, which are broadly similar across methods. Notably, at relatively larger $\alpha$ (0.1, 0.05), \conftr's MCRs are slightly higher than Baseline's. This is likely because the weighted loss in \eqref{eq:standard loss}: compared with the Baseline, which minimizes only classification error, \conftr{} allocates capacity to reducing inefficiency (pushing it below 1) rather than MCR, which can hurt accuracy. For small $\alpha$, this tradeoff is less problematic because achieving size 1 is difficult thus shrinking the set generally also reduces MCR.

The STL formula learned by \tlicp{} is
\begin{equation}
     \phi = G_{[17,19]} ((99.86<x<147.14) \land (59.64<y<89.97)),
\end{equation}
which means the block remains inside this bounding box during the final segment of the trajectory, $t\in[17,19]$. Figure~\ref{fig:predicates} visualizes this box alongside those learned by other methods. Methods that perform better on inefficiency (such as \tlicp{} and \conftr{} with $\alpha=0.005$) produce visibly tighter boxes around positive trajectories. Intuitively, tighter boxes more precisely capture the spatial information of successful executions. We also show in Figure~\ref{fig:strip} a jittered strip plot of the distributions of positive and negative samples in the test data across different methods. To facilitate comparison, we normalize the robustness values so that the margin lies between $-1$ and $1$. This normalization does not affect the results, as guaranteed by Lemma~\ref{lemma:invariant}. Although the means of the distributions for \tlicp{} and \conftr{} with $\alpha=0.005$ are closer than those of other methods, their distribution are better separated, i.e., they contain fewer samples inside the margin and fewer wrongly classified samples. Accordingly, \tlicp{} yields a more faithful and statistically supported specification of the task behavior.

\begin{table}[ht]
\centering
\subfloat[\label{tab:task1_results} Task 1]{
\scriptsize
\setlength\tabcolsep{4.7pt}
\begin{tabular}{l c c c c c c c}
\toprule
 & Baseline & \tlicp{} & \multicolumn{5}{c}{\conftr} \\
\cmidrule(lr){4-8}
 &  &  & $\alpha=0.1$ & $0.05$ & $0.01$ & $0.005$ & $0.001$ \\
\midrule
MCR & 0.0125 & 0.0050 & 0.0475 & 0.0450 & 0.0075 & 0.0125 & 0.0250 \\
\bottomrule
\end{tabular}
}

\subfloat[\label{tab:task2_results} Task 2]{
\centering
\scriptsize
\setlength\tabcolsep{4.7pt}
\begin{tabular}{l c c c c c c c}
\toprule
 & Baseline & \tlicp{} & \multicolumn{5}{c}{\conftr} \\
\cmidrule(lr){4-8}
 &  &  & $\alpha=0.1$ & $0.05$ & $0.01$ & $0.005$ & $0.001$ \\
\midrule
MCR & 0.0538 & 0.0218 & 0.0403 & 0.0403 & 0.0235 & 0.0319 & 0.0416 \\
\bottomrule
\end{tabular}
}
\caption{Average Misclassification Rate (MCR) on the Test Set.}
\label{tab:task_results}
\end{table}
\begin{figure}[ht]
    \centering
    \includegraphics[width=\columnwidth]{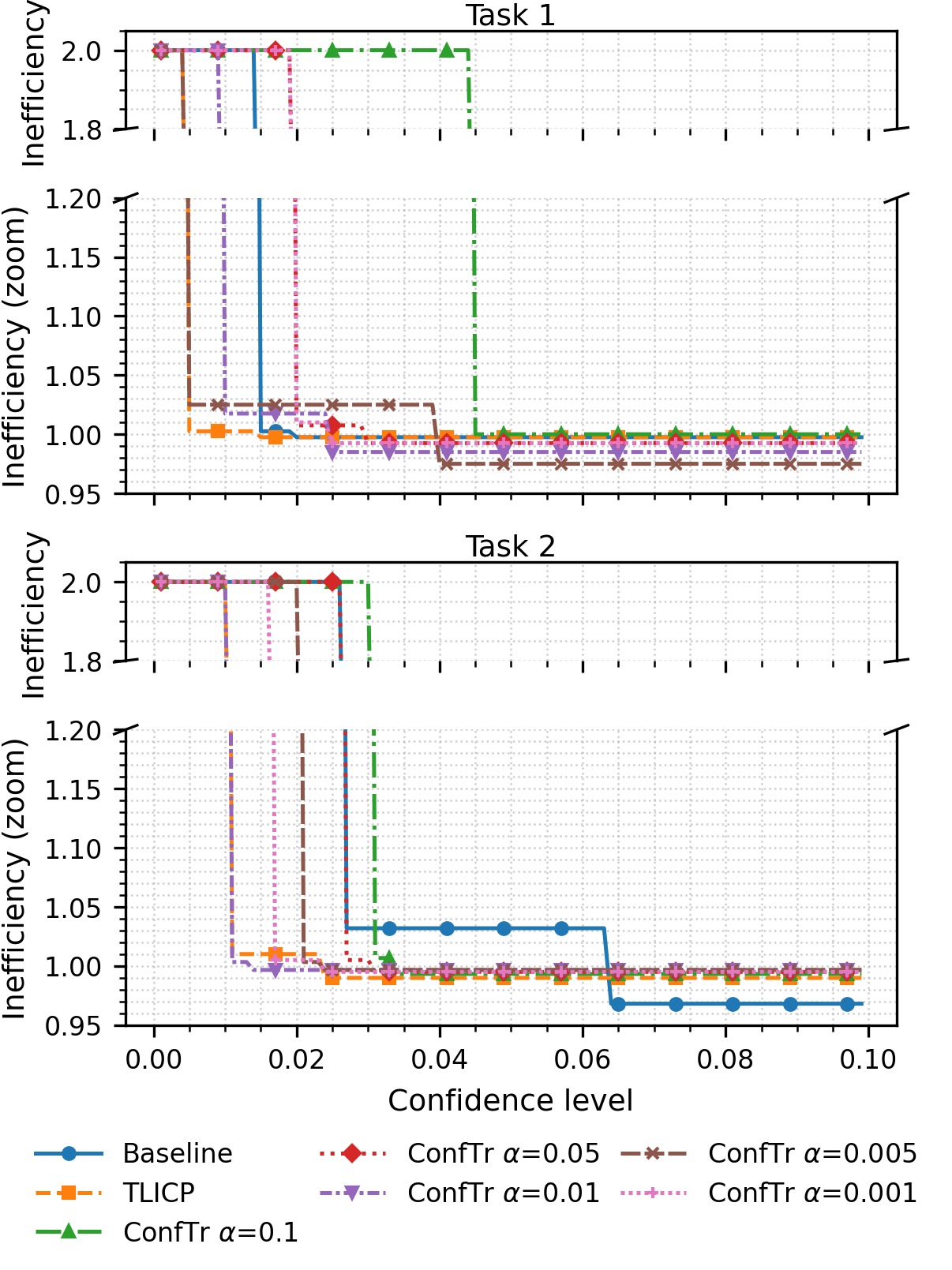}
    \caption{Confidence Level ($\alpha$) vs Average Inefficiency (Set Size) for Task 1 and Task 2.}
    \label{fig:task_results}
\end{figure}

\begin{figure}[ht]
    \centering
    \includegraphics[width=\columnwidth]{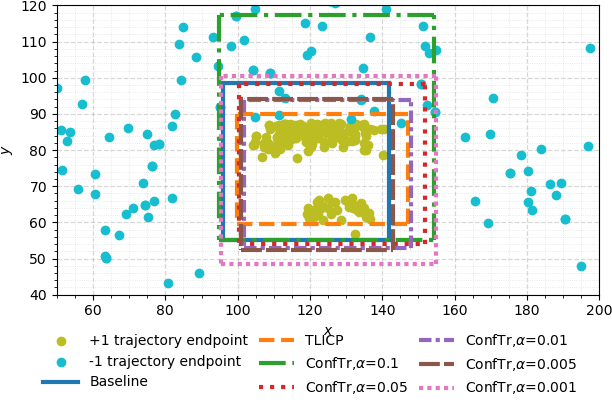}
    \caption{Specification regions inferred via STL from each method. Colored/linestyled boxes denote the learned axis-aligned predicates; points show test trajectory endpoints (positive and negative).}
    \label{fig:predicates}
\end{figure}

\begin{figure}[ht]
    \centering
    \includegraphics[width=0.95\columnwidth]{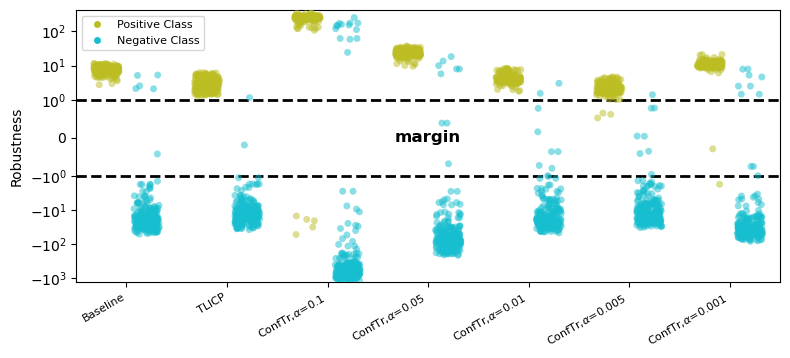}
    \caption{Jittered strip plot of the robustness distributions of samples across different methods. Each dot corresponds to an individual sample, with a small horizontal jitter added to reduce overlap and reveal density. The dashed horizontal lines at $y=\pm 1$ mark the margin region.}
    \label{fig:strip}
\end{figure}

\subsection{Task 2: Placing two blocks into a basket in order}
In the second part, we evaluate STL inference on a more complex task: placing block A and then block B into the basket without violating the constraint. We use the same setup as Task 1. Results are shown in Figure~\ref{fig:task_results} and Table~\ref{tab:task2_results}.

From Figure~\ref{fig:task_results} (Task 2), the smallest $\alpha$ at which average inefficiency stays at $1$ is $\alpha=0.01$ for both \tlicp{} and \conftr. It is reasonable since this task is more complicated than Task 1. However, \tlicp{} still matches \conftr's best achievable performance without tuning $\alpha$. Pushing $\alpha$ below $0.01$ expands the prediction sets and increases average inefficiency. Hence, selecting an appropriate $\alpha$ matters; simply decreasing $\alpha$ does not help and can make the sets unnecessarily conservative. Task 2 demands richer spatial-temporal information to classify trajectories, so the MCRs are generally higher than Task 1. The inefficiency is also more difficult to reduce. Consequently, the weighted loss tradeoff discussed earlier is less tricky here, and all \conftr{} models outperform the Baseline on MCR. The STL formula learned by \tlicp{} for Task 2 is
\begin{equation}
\phi = G_{[24,38]} (112.10<x_1<146.98 \land 63.74<y_1<88.83),
\end{equation}
which specifies that block A remains within the indicated bounding box during the final half of the trajectory ($t\in[24,38]$). Notably, the formula excludes predicates involving block B, suggesting that the behavior of block A alone is sufficient for accurate classification: the task requires block A to reach the basket in the first half of the trajectory and remain there thereafter. The conformal prediction procedure naturally favors this simplified yet sufficient specification.

\section{CONCLUSIONS}
We presented \tlicp, a differentiable conformal-prediction framework for STL inference. Our method combines a robustness-based, margin-guided, unit-invariant nonconformity score, with a smooth CP module trained via a single-term $p$-value objective, eliminating the need to pre-specify $\alpha$ or tune loss weights. After training, exact CP is applied to obtain coverage guarantees. By jointly addressing interpretability and uncertainty quantification, \tlicp{} advances expressive, statistically rigorous temporal logic learning. Future work includes extending to multi-class STL inference and exploring conformalized quantile regression for real-valued robustness.

\bibliographystyle{IEEEtran}
\bibliography{STLCP}

\end{document}